\newcommand{\midarrow}{\tikz \draw[-triangle 90] (0,0) -- +(.1,0);}
\newcommand{\ubar}[1]{\mkern2mu\underline{\mkern-2mu #1\mkern-2mu}\mkern2mu}
\newcommand{\ubm}[1]{\ubar{\bm{#1}}}
\newcommand{\ubmr}[2]{\ubar{\bm{#1}}^{#2}}
\newcommand{\bmtr}[3]{\bm{#1}^{#3}_{#2}}
\newcommand{\smtr}[3]{{#1}^{#3}_{#2}}
\begin{document}

\title{Powering Hidden Markov Model by Neural Network based Generative Models}
\author[1]{Dong Liu}
\author[1,2]{Antoine Honor\'e}
\author[1]{Saikat Chatterjee}
\author[1]{Lars~K. Rasmussen}
\affil[1]{KTH Royal Institute of Technology, Stockholm, Sweden}
\affil[2]{Karolinska Institute, Stockholm, Sweden}
\affil[]{E-mail: \{doli, honore, sach, lkra\}@kth.se}

\date{}
\maketitle

\begin{abstract}
  Hidden Markov model (HMM) has been successfully used for sequential
  data modeling problems.
  In this work, we propose to power the modeling capacity of HMM by
  bringing in neural network based generative models.
  The proposed model is termed as GenHMM.
  In the proposed GenHMM, each HMM hidden state is associated with a
  neural network based generative model that has
  tractability of exact likelihood and provides efficient likelihood
  computation.
  A generative model in GenHMM consists of a mixture of generators
  that are realized by flow models.
  A learning algorithm for GenHMM is proposed in
  expectation-maximization framework.
  The convergence of the learning GenHMM is analyzed.
  We demonstrate the efficiency of GenHMM by classification tasks on practical sequential data.
\end{abstract}

\section{Introduction}
Sequential data modeling is a challenging topic in pattern recognition and machine learning. For many applications, the assumption of independent and identically distributed (i.i.d.) data points is too strong to model data properly. Hidden Markov model (HMM) is a classic way to model sequential data without the i.i.d. assumption. HMM has been widely used in different practical problems, including applications in reinforcement learning \cite{ding2018reinforcementhmm,levine2018reinforcementReview}, natural language modeling \cite{khan2016survey,Hariyanti_2019}, biological sequence analysis such as proteins \cite{ASHWIN20172} and DNA \cite{ren2015dna}, etc.

A HMM is a statistical representation of sequential data generating process.
Each state of a HMM is associated with a probabilistic model.
The probabilistic model is used to represent the relationship between a state of HMM and sequential data input. The typical way is to use a Gaussian mixture model (GMM) per state of HMM \cite{juang1986maximum}, where GMMs are used to connect states of HMM to sequential data input. GMM based HMM (GMM-HMM) has become a standard model for sequential data modeling, and been employed widely for practical applications, especially in speech recognition \cite{gales2008application,chatterjee2011auditory}.

Given the success of GMM-HMM, it is not efficient for modeling data in nonlinear manifold. Research attempts at training HMM with neural networks have been made to boost the modeling capacity of HMM. A successful work of this track has brought deep neural network (DNN) that is defined by restrictive Boltzmann machines (RBMs) \cite{Hinton2012} into HMM based models \cite{hinton2012deepSpeech,li2013hybrid,Miao2013ImprovingLC}. RBM based HMM is trained with a hierarchical scheme consisting of multiple steps of unsupervised learning, formatting of a classification network and then supervised learning. The hierarchical procedure comes from the empirical expertise in this domain. To be more specific, the hierarchical learning scheme of RBM/DNN based HMM consists of: i) RBMs are trained one after the other in unsupervised fashion, and are stacked together as one deep neural network model, ii) then a final softmax layer is added to the stack of RBMs to represent the probability of a HMM state given a data input, iii) a discriminative training is performed for the final tuning of the model at the final stage.

Another track of related work is hybrid method of temporal neural network models and HMM. In \cite{liu2019lstmHmmHyb,buys2018bridging,vik2016rnnHmm}, a long short-term memory (LSTM) model/recurrent neural network (RNN) is combined with HMM as hybrid. A hierarchical training is carried out by: i) training a HMM first, ii) then doing modified training of LSTM using trained HMM. This hierarchical training procedure is motivated by the intuition of using LSTM or RNN to fill in the gap where HMM can not learn. 


The above works help improve modeling capacity of HMM based models by bringing in neural networks. A softmax layer is usually used to represent probability whenever a conditional distribution is needed. These hierarchical schemes are built based on intuition of domain knowledge. Training of these hierarchical models usually requires expertise in specific areas to be able to proceed with the hierarchical procedure of training and application usage. 

In this work, we propose a generative model based HMM, termed as GenHMM. Specifically, a generative model in our GenHMM is generator-mixed, where a generator is realized by a neural network to help the model gain high modeling capacity.
Our proposed model, GenHMM,
\begin{itemize}
\item has high modeling capacity of sequential data, due to the neural network based generators;
\item is easy to train. Training of GenHMM employs expectation maximization (EM) framework. Therefore, training a GenHMM is as easy as training a GMM-HMM model, while configuration of GenHMM is flexible;
\item is able to compute loglikelihood exactly and efficiently.
\end{itemize}
Instead of using softmax for probability representation, our GenHMM has tractability of exact loglikelihood of given sequential data, which is based on the change of variable formula. To make the loglikelihood computation efficient, neural network based generators of GenHMM are realized as flow models. 

Our contributions in the paper are as follows.
\begin{itemize}
\item Proposing a neural network based HMM for sequential data modeling, i.e. GenHMM. GenHMM has the tractability of exact likelihood.
\item Designing practical algorithm for training GenHMM under EM framework. Stochastic gradient  search in batch fashion is embedded in this algorithm.
\item Giving convergence analysis for GenHMM under the proposed learning algorithm.
\item Verifying the proposed model on practical sequential data.
\end{itemize}

\section{Generator-mixed HMM (GenHMM)}

\begin{figure}[!t]
  \centering
  \begin{tikzpicture}
    \tikzstyle{enode} = [thick, draw=black, ellipse, inner sep = 1pt,  align=center]
    \tikzstyle{nnode} = [thick, rectangle, rounded corners = 2pt,minimum size = 0.8cm,draw,inner sep = 2pt]
    \node[enode] (g1) at (-0.5,1.8) {$p(\bm{x}| s=1; \bm{\Phi}_{1})$};
    \node[enode] (g2) at (-0.5,0.5) {$p(\bm{x}| s=2; \bm{\Phi}_{2})$};
    \node[enode] (gs) at (-0.5, -1.8) {$p(\bm{x}| s=|\Ss|; \bm{\Phi}_{|\Ss|})$};
    \node[enode] (x) at (4.5,1.5){$\ubm{x}\sim p(\ubm{x};\bm{H})$};

    \draw[dotted,line width=2pt] (0,-0.3) -- (0,-1.2);
    \filldraw[->] (1.9, 0.5)circle (2pt) --  (x) ;
    \draw[->] (g1) -- (1.8, 1.8);
    \draw[->] (g2) -- (1.8, 0.5);
    \draw[->] (gs) -- (1.8, -1.8);

    \begin{scope}[xshift=0.5cm, thick, every node/.style={sloped,allow upside down}]
      \node[nnode] (m) at (3.5,-2) {Memory};
      \node[nnode] (a) at (3.5,-0.5) {$\bm{A}$};

      \draw (2.1,0.9)-- (2.2, 0.);
      \draw (2.2,0.)-- node {\midarrow} (2.2,-2);
      \draw (2.2,-2)-- (m);
      \draw (m)-- (5, -2);
      \draw (5, -2)-- node {\midarrow} (5 ,-0.5);
      \draw (5, -0.5) -- (a);
      \draw (a)-- node {\midarrow} (2.2, -0.5);
      \node at (4.8, -1) {$s_{t}$};
      \node at (2.56, -0.25) {$s_{t+1}$};
    \end{scope}
  \end{tikzpicture}
  \caption{HMM model illustration.}\label{fig:hmm}
  \vspace{0.3cm}
\end{figure}
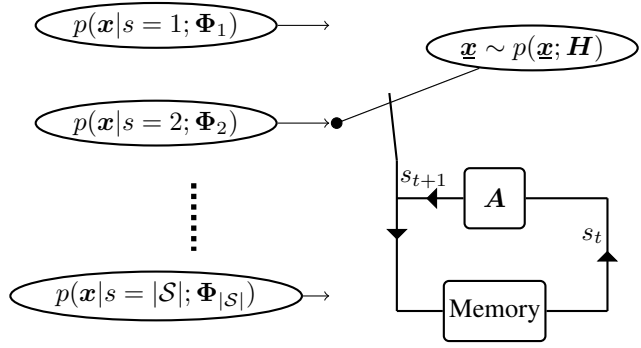

Our framework is a HMM. A HMM $\bm{H}$ defined in a hypothesis space $\Hh$, i.e. $\bm{H} \in \Hh$, is capable to model time-span signal $\ubar{\bm{x}} = \left[ \bm{x}_1, \cdots, \bm{x}_T\right]^{\intercal}$, where $\bm{x}_t\in \RR^{N}$ is the $N$-dimensional signal at time $t$, $[\cdot]^{\intercal}$ denotes transpose, and $T$ denotes the time length\footnote{The length for  sequential data varies.}. We define the hypothesis set of HMM as $\Hh := \{\bm{H} | \bm{H}=\{\Ss, \bm{q}, \bm{A}, p(\bm{x}|{s}; \bm{\Phi}_{s})\}\}$, where
\begin{itemize}
\item $\Ss$ is the set of hidden states of $\bm{H}$.
\item $\bm{q} = \left[ q_1, q_2, \cdots, q_{|\Ss|}\right]^\intercal$ is the initial state distribution of $\bm{H}$ with $|\Ss|$ as cardinality of $\Ss$. For $i \in \Ss$, $q_i = p(s_{1}=i;\bm{H})$. We use $s_t$ to denote the state $s$ at time $t$.
\item $\bm{A}$ matrix of size $|\Ss| \times |\Ss|$ is the transition matrix of states in $\bm{H}$. That is, $\forall i, j \in \Ss$,  $\bm{A}_{i,j} = p(s_{t+1}=j|s_{t}=i; \bm{H})$.
\item For a given hidden state $s$, the density function of the observable signal is $p({\bm{x}}|{s};\bm{\Phi}_{s})$, where $\bm{\Phi}_{s}$ is the parameter set that defines this probabilistic model. Denote $\bm{\Phi} = \left\{ \bm{\Phi}_{s}| s \in \Ss \right\}$.
\end{itemize}

Using HMM for signal representation is illustrated in Figure~\ref{fig:hmm}. The model assumption is that different instant signal of $\ubar{\bm{x}}$ is generated by a different signal source associated with a hidden state of HMM.
In the framework of HMM, at each time instance $t$, signal $\bm{x}_t$ is assumed to be generated by a distribution with density function $p(\bm{x}_t| s_t; \bm{\Phi}_{s_t})$, and $s_t$ is decided by the hidden markov process. Putting these together gives us the probabilistic model $p(\ubm{x};\bm{H})$.

\subsection{Generative Model of GenHMM}

\begin{figure}[!t]
  \centering
  \begin{tikzpicture}
    \tikzstyle{enode} = [thick, draw=black, ellipse, inner sep = 2pt,  align=center]
    \tikzstyle{nnode} = [thick, rectangle, rounded corners = 2pt,minimum size = 0.8cm,draw,inner sep = 2pt]
    \node[enode] (z1) at (-1.2,1.8) {$\bm{z}\sim p_{s,1}(\bm{z})$};
    \node[nnode] (g1) at (1,1.8) {$\bm{g}_{s,1}$};
    \node[enode] (z2) at (-1.2,0.5){$\bm{z}\sim p_{s,2}(\bm{z})$};
    \node[nnode] (g2) at (1,0.5) {$\bm{g}_{s,2}$};
    \node[enode] (zK) at (-1.2,-1.8) {$\bm{z}\sim p_{s,K}(\bm{z})$};
    \node[nnode] (gs) at (1, -1.8) {$\bm{g}_{s,K}$};
    \node[enode] (x) at (4.5,0){$\bm{x}\sim p(\bm{x}| s; \bm{\Phi}_{s})$};

    \draw[dotted,line width=2pt] (0,-0.3) -- (0,-1.2);
    \filldraw[->] (1.9, 0.5)circle (2pt) --  node[above=0.2]{${\kappa}\sim \bm{\pi}_{s}$} (x)  ;
    \draw[->] (z1) -- (g1);
    \draw[->] (g1) -- (1.8, 1.8);

    \draw[->] (z2) -- (g2);
    \draw[->] (g2) -- (1.8, 0.5);

    \draw[->] (zK) -- (gs);
    \draw[->] (gs) -- (1.8, -1.8);
  \end{tikzpicture}
  \caption{Source of state $s$ in GenHMM.}
  \label{fig:gen-mix}
  \vspace{0.3cm}
\end{figure}
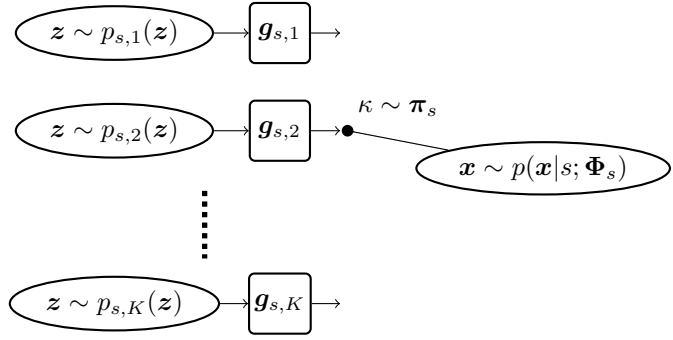

In this section, we introduce the neural network based state probabilistic model of our GenHMM.
Recall that $\bm{x}\in\mathbb{R}^N$. Subscript is omitted when it does not cause ambiguity.
The probabilistic model of GenHMM for each hidden state is a mixture of $K$ neural network based generators, where $K$ is a positive integer.
The probabilistic model of a state $s\in\Ss$ is then given by
\begin{equation}\label{eq:state-prob-model}
  p(\bm{x}| s; \bm{\Phi}_{s}) = \sum_{\kappa=1}^{K}\pi_{s, \kappa} p(\bm{x}| s, \kappa; \bm{\theta}_{s, \kappa}),
\end{equation}
where $\kappa$ is a random variable following a categorical distribution, with probability $\pi_{s, \kappa} = p(\kappa | s; \bm{H})$.
Naturally $\sum_{\kappa = 1}^{K} \pi_{s, \kappa}= 1$. Denote $\bm{\pi}_{s} = [\pi_{s,1}, \pi_{s,2}, \cdots, \pi_{s,K}]^{\intercal}$. 
In \eqref{eq:state-prob-model}, $p(\bm{x}| s, \kappa; \bm{\theta}_{s, \kappa})$ is defined as induced distribution by a generator $\bm{g}_{s,\kappa}: \RR^{N}\rightarrow\RR^{N}$, such that $\bm{x}=\bm{g}_{s, \kappa}(\bm{z})$, where $\bm{z}$ is a latent variable following a distribution with density function $p_{s,\kappa}(\bm{z})$. Generator $\bm{g}_{s,\kappa}$ is parameterized by $\bm{\theta}_{s, \kappa}$. Let us denote the collection of the parameter sets of generators for state $s$ as $\bm{\theta}_s = \left\{ \bm{\theta}_{s, \kappa}| \kappa = 1, 2, \cdots, K \right\}$. Assuming $\bm{g}_{s, \kappa}$ is invertible, by change of variable, we have
\begin{equation}\label{eq:changel-variable}
  p(\bm{x}| s, \kappa; \bm{\theta}_{s, \kappa}) = p_{s,\kappa}(\bm{z})\bigg| \det\left( \pd{\bm{g}_{s,\kappa}(\bm{z})}{\bm{z}} \right)\bigg|^{-1}.
\end{equation}

The signal flow of the probability distribution for a state $s$ of GenHMM is shown in Figure~\ref{fig:gen-mix}, in which the generator identity is up to the random variable $\kappa$.

\subsection{Learning in EM framework}

Assume the sequential signal $\ubm{x}$ follows an unknown distribution $p(\ubm{x})$. We would like to use GenHMM to model this distribution. Alternatively, we are looking for the answer to the question
\begin{equation}
  \umin{\bm{H}\in \Hh} KL({p}(\ubm{x})\| p(\ubm{x};\bm{H})),
\end{equation}
where $KL(\cdot\|\cdot)$ denotes the Kullback-Leibler divergence. For practical consideration, we only have access to the samples of $p(\ubm{x})$, i.e. the dataset of this distribution. For the given dataset, we denote its empirical distribution by $\hat{p}(\ubm{x}) = \frac{1}{R}\sum_{r=1}^{R} \delta_{\ubmr{x}{r}}(\ubm{x})$, where $R$ denotes the total number of sequential samples and superscipt $(\cdot)^{r}$ denotes the index of $r$-th sequential signal. 
The KL divergence minimization problem can be reduced to a likelihood maximization problem
\begin{equation}\label{eq:ml-of-hmm}
  \uargmax{\bm{H} \in \Hh} \frac{1}{R}\sum_{r=1}^{R}\log\,p(\ubmr{x}{r}; \bm{H}).
\end{equation}

For the likelihood maximization, the first problem that we need to address is to deal with the hidden sequential variables of model $\bm{H}$, namely $\ubm{s}=[ \bm{s}_1, \bm{s}_2, \cdots, \bm{s}_T ]^{\intercal}$ and $\ubm{\kappa} = [\bm{\kappa}_1, \bm{\kappa}_2, \cdots, \bm{\kappa}_T]^{\intercal}$. For a sequential observable variable $\ubm{x}$, $\ubm{s}$ is the hidden state sequence corresponding to $\ubm{x}$, and $\ubm{\kappa}$ is the hidden variable sequence representing the generator identity sequence that actually generates $\ubm{x}$.

Since directly maximizing likelihood is not an option for our problem in \eqref{eq:ml-of-hmm}, we address this problem in expectation maximization (EM) framework. This divides our problem into  two iterative steps: i) using the joint posterior of hidden variable sequences $\ubm{s}$ and $\ubm{\kappa}$ to obtain an ``expected likelihood'' of the observable variable sequence $\ubm{x}$, i.e. the E-step; ii) maximizing the expected likelihood with regard to (w.r.t.) the model $\bm{H}$, i.e. the M-step. Assume model $\bm{H}$ is at a configuration of $\bm{H}^{\mathrm{old}}$, we formulate these two steps as follows.
\begin{itemize}
\item E-step: 
  the expected likelihood function
  \begin{equation}\label{eq:em-q-funciton}
    \Qq(\bm{H}; \bm{H}^{\mathrm{old}}) = \EE_{\hat{p}(\ubm{x}),p(\ubm{s},\ubm{\kappa}| \ubm{x}; \bm{H}^{\mathrm{old}})}\left[ \log\,p(\ubm{x}, \ubm{s}, \ubm{\kappa}; \bm{H})\right],
  \end{equation}
  where $\EE_{\hat{p}(\ubm{x}),p(\ubm{s},\ubm{\kappa}| \ubm{x}; \bm{H}^{\mathrm{old}})}\left[ \cdot\right]$ denotes the expectation operator by distribution $\hat{p}(\ubm{x})$ and $p(\ubm{s},\ubm{\kappa}| \ubm{x}; \bm{H}^{\mathrm{old}})$.
\item M-step: the maximization step
  \begin{equation}\label{eq:em-m-opt}
    \umax{\bm{H}} \Qq(\bm{H}; \bm{H}^{\mathrm{old}}).
  \end{equation}
\end{itemize}

The problem \eqref{eq:em-m-opt} can be reformulated as
\begin{align}\label{eq:m-step-subs}
  &\umax{\bm{H}} \Qq(\bm{H}; \bm{H}^{\mathrm{old}}) \nonumber \\
  =&\umax{\bm{q}}\Qq(\bm{q}; \bm{H}^{\mathrm{old}}) + \umax{\bm{A}}\Qq(\bm{A}; \bm{H}^{\mathrm{old}}) 
     + \umax{\bm{\Phi}}\Qq(\bm{\Phi}; \bm{H}^{\mathrm{old}}),
\end{align}
where the decomposed optimization problems are
\begin{align}
  \Qq(\bm{q}; \bm{H}^{\mathrm{old}}) 
    &= \EE_{\hat{p}(\ubm{x}),p(\ubm{s}| \ubm{x}; \bm{H}^{\mathrm{old}})} \left[ \log\,p({s}_{1};\bm{H})  \right], \label{eq:init-distribution-update}\\
  \Qq(\bm{A}; \bm{H}^{\mathrm{old}}) &=\EE_{\hat{p}(\ubm{x}),p(\ubm{s}| \ubm{x}; \bm{H}^{\mathrm{old}})}\hspace{-0.1cm}\left[ \sum_{t=1}^{T-1}\log\,p({s}_{t+1}|{s}_{t}; \bm{H}) \right], \label{eq:transition-update}\\
  \Qq(\bm{\Phi}; \bm{H}^{\mathrm{old}}) &= \EE_{\hat{p}(\ubm{x}),p(\ubm{s},\ubm{\kappa}| \ubm{x}; \bm{H}^{\mathrm{old}})} \left[ \log\,p(\ubm{x}, \ubm{\kappa}| \ubm{s}; \bm{H}) \right]. \label{eq:generative-model-update}
\end{align}

We can see that the solution of $\bm{H}$ depends on the posterior probability $p(\ubm{s}| \ubm{x}; \bm{H})$. Though the evaluation of posterior according to Bayesian theorem is straightforward, the computation complexity of $p(\ubm{s}| \ubm{x}; \bm{H})$ grows exponentially with the length of $\ubm{s}$. Therefore, we employ forward-backward algorithm \cite{Bishop:2006:PRM:1162264} to do the posterior computation efficiently. As we would detail in the next section, what are needed to formulate the problem, are actually the $p(s| \ubm{x}; \bm{H})$ and $p(s, \kappa| \ubm{x}; \bm{H})$. For the joint posterior $p(s, \kappa| \ubm{x}; \bm{H})$, it can be computed by the Bayesian rule when posterior of hidden state is available.

With such a solution framework ready for GenHMM, there are still remaining problems to address before it can be employed for practical usage, including
\begin{itemize}
\item how to realize GenHMM by neural network based generators such that likelihood of their induced distributions can be computed explicitly and exactly?
\item how to train GenHMM to solve problem in \eqref{eq:ml-of-hmm} using practical algorithm?
\item would the training of GenHMM converge?
\end{itemize}
We tackle these problems in the following section.

\section{Solution for GenHMM}

In this section, we detail the solution for realizing and learning GenHMM. 
The convergence of GenHMM is also discussed in this section.

\subsection{Realizing $\bm{g}_{s,\kappa}$ by a Flow Model}
Each generator $\bm{g}_{s,\kappa}$ is realized as a feed-forward neural netowrk.
We define $\bm{g}_{s,\kappa}$ as a $L$-layer neural network and formulate its mapping by layer-wise concatenation:
$\bm{g}_{s,\kappa}=\bm{g}_{s,\kappa}^{[L]}\circ \bm{g}_{s,\kappa}^{[L-1]}\circ \cdots
\circ \bm{g}_{s,\kappa}^{[1]}$, where superscript $[l]$ denotes the layer index and $\circ$ denotes mapping concatenation. Assume $\bm{g}_{s,\kappa}$ is invertible and denote its inverse mapping as $\bm{f}_{s,\kappa}=\bm{g}_{s,\kappa}^{-1}$. For a latent variable $\bm{z}$ with density function $p_{s,\kappa}(\bm{z})$, the generated signal $\bm{x}$ follows an induced distribution with density function \eqref{eq:changel-variable}. We illustrate the signal flow between latent variable $\bm{z}$ and observable variable $\bm{x}$ as 
\begin{equation*}
  \vspace{-8pt}
  \centering
  \begin{tikzpicture}
    \node (z) at (0,0) {};
    \node at ($(z)-(0.5,0)$){$\bm{z}=\bm{h}_0$};
    \node (xi1) at (1.5,0) {$\bm{h}_1$};
    \node (xi2) at (3,0) {};
    \node (xi3) at (4.5,0){};
    \node (x) at (6,0) {};
    \node at ($(x)+(0.5,0)$){${\bm{h}_L=\bm{x}}$};
    \draw[->] ($(z) + (0.3,0.1)$) -- node[above]{$\bm{g}_{s,\kappa}^{[1]}$} ($(xi1)+(-0.3,0.1)$); 
    \draw[->] ($(xi1)-(0.3,0.1)$) -- node[below]{${\bm{f}}_{s,\kappa}^{[1]}$}($(z) - (-0.3,0.1)$);
    \draw[->] ($(xi1) + (0.3,0.1)$) -- node[above]{$\bm{g}_{s,\kappa}^{[2]}$} ($(xi2)+(-0.3,0.1)$); 
    \draw[->] ($(xi2)-(0.3,0.1)$) -- node[below]{${\bm{f}}_{s,\kappa}^{[2]}$}($(xi1) - (-0.3,0.1)$);
    \draw[->] ($(xi3) + (0.3,0.1)$) -- node[above]{$\bm{g}_{s,\kappa}^{[L]}$} ($(x)+(-0.3,0.1)$); 
    \draw[->] ($(x)-(0.3,0.1)$) -- node[below]{${\bm{f}}_{s,\kappa}^{[L]}$}($(xi3) - (-0.3,0.1)$);
    \draw[dotted,line width = 0.3 mm] (xi2) -- (xi3);
  \end{tikzpicture}
\end{equation*}
where $\bm{f}_{s,\kappa}^{[l]}$ is the $l$-th layer of $\bm{f}_{s,\kappa}$. We have $\bm{z}=\bm{f}_{s,\kappa}(\bm{x})$. If every layer of $\bm{g}_{s,\kappa}$ is invertible, the full feed-forward neural network is invertible. Flow model, proposed in \cite{DBLP:journals/corr/DinhKB14} as an image generating model, is such an invertible feed-forward layer-wise neural network. It is further improved in subsquential works \cite{2016arXiv160508803D,kingma2018glow} for high-fidelity and high-resolution image generating and representation. As shown in \eqref{eq:changel-variable}, the challenge lies at the computation of Jacobian determinant. {Another track of flow models uses a continuous-depth models instead. The variable change is defined by an ordinary differential equation implemented by a neural network \cite{NIPS2018_7892, DBLP:grathwohl2018FFJORD}, where the key becomes to solve the ODE problem. We use the layer-wise flow model to model the variable change in \eqref{eq:changel-variable} in which the efficient Jacobian computation is available.}

For a flow model, let us assume that
the feature $\bm{h}_l$ at the $l$'th layer has two subparts as
$\bm{h}_l = [\bm{h}_{l,a}^{\intercal} \, , \, \bm{h}_{l,b}^{\intercal}]^{\intercal}$. The efficient invertible mapping of flow model comes from following forward and inverse relations between $(l-1)$'th and $l$'th layers
\begin{align}\label{eq-gl}
  \bm{h}_{l} &=
               \begin{bmatrix}
                 \bm{h}_{l,a}\\
                 \bm{h}_{l,b}
               \end{bmatrix}
  =
  \begin{bmatrix}
    \bm{h}_{l-1,a}\\
    \left(  \bm{h}_{l-1,b} - \bm{m}_b(\bm{h}_{l-1,a}) \right)\oslash \bm{m}_a(\bm{h}_{l-1,a}) 
  \end{bmatrix}, \nonumber\\
  \bm{h}_{l-1} \hspace{-0.1cm}&=\hspace{-0.1cm}
                                \begin{bmatrix}
                                  \bm{h}_{l-1,a}\\
                                  \bm{h}_{l-1,b}
                                \end{bmatrix}
  \hspace{-0.1cm}=\hspace{-0.1cm}
  \begin{bmatrix}
    \bm{h}_{l,a}\\
    \bm{m}_a(\bm{h}_{l,a})\odot \bm{h}_{l,b} + \bm{m}_b(\bm{h}_{l,a})
  \end{bmatrix},
\end{align}
where $\odot$ denotes element-wise product, $\oslash$ denotes
element-wise division, and $\bm{m}_a(\cdot), \bm{m}_b(\cdot)$ can be
complex non-linear mappings (implemented by neural networks).
For the flow model, the determinant of Jacobian matrix is
\begin{equation}\label{eq:cat-jacobian}
  \begin{array}{rl}
    \mathrm{det}(\nabla{\bm{f}_{s,\kappa}}) = \prod_{l=1}^L \det (\nabla{\bm{f}_{s,\kappa}^{[l]}}),
  \end{array}
\end{equation}
where $\nabla{\bm{f}_{s,\kappa}^{[l]}}$ is the Jacobian of the mapping from the $l$-th layer to the $(l-1)$-th layer, i.e., the inverse transformation. We compute the determinant of the Jacobian matrix as
\begin{align}\label{eq-hl-determinate}
  \det (\nabla{f_{s,\kappa}^{[l]}})& = \det \left[  \pd{\bm{h}_{l-1}}{\bm{h}_l} \right] \nonumber\\
                                   & = \det
                                     \begin{bmatrix}
                                       \bm{I}_a & \mathbf{0} \nonumber\\
                                       \pd{\bm{h}_{l-1,b}}{\bm{h}_{l,a}} & \mathrm{diag}(\bm{m}_a(\bm{h}_{l,a}))
                                     \end{bmatrix}\nonumber\\
                                   &= \det \left( \mathrm{diag}(\bm{m}_a(\bm{h}_{l,a})) \right),
\end{align}
where $\bm{I}_a$ is identity matrix and $\mathrm{diag}(\cdot)$ returns a square matrix with the elements of $\cdot$ on the main diagnal.

\eqref{eq-gl} describes a \textit{coupling} layer in a flow model. A flow model is basically a stack of multiple coupling layers. But the issue of direct concatenation of multiple such coupling mappings is partial identity mapping of the whole model. This issue can be addressed by alternating hidden signal order after each coupling layer.


\subsection{Learning of GenHMM}\label{subsec:optmGenHMM}
In this subsection, we address the problem of learning GenHMM.
\subsubsection{Generative Model Learning}
The generative model learning is actually to solve the problem in \eqref{eq:generative-model-update}, which can be further divided into two subproblems: i) generator learning; ii) mixture weights of generators learning. Let us define notations:
$\bm{\Pi} = \left\{  \bm{\pi}_{s}| s\in \Ss \right\}$, $\bm{\Theta}=\left\{ \bm{\theta}_s| s\in \Ss \right\}$. 
Then the problem in \eqref{eq:generative-model-update} becomes
\begin{align}\label{eq:sub-gm}
  &\umax{\bm{\Phi}} \Qq(\bm{\Phi}; \bm{H}^{\mathrm{old}}) = \umax{\bm{\Pi}} \Qq(\bm{\Pi}; \bm{H}^{\mathrm{old}}) + \umax{\bm{\Theta}} \Qq(\bm{\Theta}; \bm{H}^{\mathrm{old}}),
\end{align}
where
\begin{align}
  \Qq(\bm{\Pi}; \bm{H}^{\mathrm{old}})  &=\EE_{\hat{p}(\ubm{x}),p(\ubm{s},\ubm{\kappa}| \ubm{x}; \bm{H}^{\mathrm{old}})}\left[  \log\,p(\ubm{\kappa}| \ubm{s}; \bm{H})\right], \\
  \Qq(\bm{\Theta}; \bm{H}^{\mathrm{old}}) &=\EE_{\hat{p}(\ubm{x}),p(\ubm{s},\ubm{\kappa}| \ubm{x}; \bm{H}^{\mathrm{old}})}\left[  \log\,p(\ubm{x}| \ubm{s},\ubm{\kappa}; \bm{H})\right].
\end{align}

We firstly address the generator learning problem, i.e. $\umax{\bm{\Theta}} \Qq(\bm{\Theta}; \bm{H}^{\mathrm{old}})$. This is boiled down to maximize the cost function of neural networks that can be formulated as
\begin{align}\label{eq:obj-q-gen-mix-log}
  &\Qq(\bm{\Theta}; \bm{H}^{\mathrm{old}}) \nonumber \\
  = &\frac{1}{R}\sum_{r=1}^{R}\sum_{\ubmr{s}{r}}\sum_{\ubmr{\kappa}{r}}{p(\ubmr{s}{r}, \ubmr{\kappa}{r}| \ubmr{x}{r}; \bm{H}^{\mathrm{old}})} \sum_{t=1}^{{T}^{r}}\log\,p(\bmtr{x}{t}{r} | \smtr{s}{t}{r}, \smtr{\kappa}{t}{r}; \bm{H}) \nonumber \\
  =& \frac{1}{R}\sum_{r=1}^{R} \sum_{t=1}^{{T}^{r}} \sum_{\smtr{s}{t}{r}=1}^{|\Ss|}  \sum_{\smtr{\kappa}{t}{r}=1}^{K}p(\smtr{s}{t}{r}| \ubmr{x}{r}; \bm{H}^{\mathrm{old}})p(\smtr{\kappa}{t}{r}|\smtr{s}{t}{r}, \ubmr{x}{r}; \bm{H}^{\mathrm{old}}) \nonumber\\
  &  \log\, p(\bmtr{x}{t}{r} | \smtr{s}{t}{r}, \smtr{\kappa}{t}{r}; \bm{H}), 
\end{align}
where $T^r$ is the length of the $r$-th sequential data. In \eqref{eq:obj-q-gen-mix-log}, the state posterior $p(s_t| \ubm{x}, \bm{H}^{\mathrm{old}})$ is computed by forward-backward algorithm. The posterior of $\kappa$ is
\begin{align}\label{eq:kappa-posterior}
  p(\kappa| s, \ubm{x}; \bm{H}^{\mathrm{old}})
  &=  \frac{p(\kappa, \ubm{x}| s; \bm{H}^{\mathrm{old}})}{p(\ubm{x}| s,\bm{H}^{\mathrm{old}})} \nonumber \\
  & = \frac{\pi_{s, \kappa}^{\mathrm{old}} p(\bm{x}| s, \kappa, \bm{H}^{\mathrm{old}})}{\sum_{\kappa=1}^{K}  \pi_{s, \kappa}^{\mathrm{old}} p(\bm{x}| s, \kappa,\bm{H}^{\mathrm{old}})},
\end{align}
where the last equation is due to the fact that $\bm{x}_t$ among sequence $\ubm{x}$ only depends on $s_t, \kappa_t$. 

By substituting \eqref{eq:changel-variable} and \eqref{eq:cat-jacobian} into \eqref{eq:obj-q-gen-mix-log}, we have cost function for neural networks as
\begin{align}\label{eq:obj-q-gen-mix}
  &\Qq(\bm{\Theta}; \bm{H}^{\mathrm{old}}) \nonumber \\
  =& \frac{1}{R}\hspace{-3pt}\sum_{r=1}^{R}\hspace{-3pt} \sum_{t=1}^{{T}^{r}}\hspace{-3pt} \sum_{\smtr{s}{t}{r}=1}^{|\Ss|} \hspace{-3pt} \sum_{\smtr{\kappa}{t}{r}=1}^{K}p(\smtr{s}{t}{r}| \ubmr{x}{r}; \bm{H}^{\mathrm{old}})p(\smtr{\kappa}{t}{r}|\smtr{s}{t}{r}, \ubmr{x}{r}; \bm{H}^{\mathrm{old}}) \nonumber\\
  &\left[ \log\, p_{\smtr{s}{t}{r}, \smtr{\kappa}{t}{r}}(\bm{f}_{\smtr{s}{t}{r}, \smtr{\kappa}{t}{r}}(\bmtr{x}{t}{r})) + \sum_{l=1}^{L}\log\,| \det (\nabla{\bm{f}_{s,\kappa}^{[l]}})|\right].
\end{align}
The generators of GenHMM simply use standard Gaussian distribution for latent variables $\bm{z} \sim p_{s,\kappa}(\bm{z})$. Since training dataset can be too large to do whole-dataset iterations, batch-size stochastic gradient decent can be used to maximize $\Qq(\bm{\Theta; \bm{H}^{\mathrm{old}}})$ w.r.t. parameters of generators.

In what follows we address the problem $\max_{\bm{\Pi}} \Qq(\bm{\Pi}; \bm{H}^{\mathrm{old}})$ in our generative model learning. The conditional distribution of hidden variable $\kappa$, $\pi_{s, \kappa} = p(\kappa | s; \bm{H})$, is obtained by solving the following problem
\begin{align}\label{opm:pi}
  \pi_{s, \kappa} & = \uargmax{\pi_{s, \kappa}} \Qq(\bm{\Pi}; \bm{H}^{\mathrm{old}}) \\ \nonumber
                  & s.t. \, \sum_{\kappa=1}^{K} \pi_{s, \kappa}= 1, \forall s = 1, 2, \cdots, |\Ss|. 
\end{align}

To solve problem \eqref{opm:pi}, we formulate its Lagrange function as
\begin{equation}
  \Ll = \Qq(\bm{\Pi}; \bm{H}^{\mathrm{old}}) + \sum_{s=1}^{|\Ss|} \lambda_s\left( 1-  \sum_{\kappa=1}^{K}\pi_{s, \kappa}  \right).
\end{equation}
Solving $\pd{\Ll}{\pi_{s, \kappa}} = 0$ gives
\begin{equation}
  \pi_{s,\kappa} = \frac{1}{\lambda_s}\sum_{r=1}^{R} \sum_{t=1}^{{T}^{r}} p(\smtr{s}{t}{r}=s, \smtr{\kappa}{t}{r} =\kappa| \ubmr{x}{r}; \bm{H}^{\mathrm{old}}).
\end{equation}
With condition $\sum_{\kappa=1}^{K} \pi_{s, \kappa}= 1, \forall s = 1, 2, \cdots, |\Ss|$, we have
\begin{equation}
  \lambda_s = \sum_{\kappa=1}^{K}\sum_{r=1}^{R} \sum_{t=1}^{{T}^{r}} p(\smtr{s}{t}{r}=s, \smtr{\kappa}{t}{r} =\kappa | \ubmr{x}{r}; \bm{H}^{\mathrm{old}}).
\end{equation}
Then the solution to \eqref{opm:pi} is
\begin{equation}\label{eq:mix-latent-parameter-solution}
  \pi_{s, \kappa} = \frac{\sum_{r=1}^{R} \sum_{t=1}^{{T}^{r}} p(\smtr{s}{t}{r} =s, \smtr{\kappa}{t}{r}=\kappa | \ubmr{x}{r}; \bm{H}^{\mathrm{old}}) }{\sum_{k =1}^{K}\sum_{r=1}^{R} \sum_{t=1}^{{T}^{r}} p(\smtr{s}{t}{r} =s, \smtr{\kappa}{t}{r}=k | \ubmr{x}{r}; \bm{H}^{\mathrm{old}}) },
\end{equation}
where
\begin{equation}
  p(s, \kappa | \ubm{x}; \bm{H}^{\mathrm{old}}) = p(s| \ubm{x}; \bm{H}^{\mathrm{old}}) p(\kappa | s, \ubm{x}; \bm{H}^{\mathrm{old}}).
\end{equation}
Here $p(s| \ubm{x}; \bm{H}^{\mathrm{old}})$ can be computed by forward-backward algorithm, while $p(\kappa | s, \ubm{x}; \bm{H}^{\mathrm{old}})$ is given by \eqref{eq:kappa-posterior}.

With the generative model learning obtained, it remains to solve the initial distribution update and transition matrix update of HMM in GenHMM, i.e. the problem \eqref{eq:init-distribution-update} and \eqref{eq:transition-update}. These two problems are basically two constrained optimization problems. The solutions to them are available in literature \cite{Bishop:2006:PRM:1162264}. But to keep learning algorithm for GenHMM complete, we give the update rules for $\bm{q}$ and $\bm{A}$ as follows.

\subsubsection{Initial Probability Update}
The problem in \eqref{eq:init-distribution-update} can be reformulated as
\begin{align}
  &\Qq(\bm{q}; \bm{H}^{\mathrm{old}}) \nonumber \\
  &=\frac{1}{R} \sum_{r=1}^{R}\sum_{\ubmr{s}{r}} {p(\ubmr{s}{r}| \ubmr{x}{r}; \bm{H}^{\mathrm{old}})} \log\,p(\smtr{s}{1}{r};\bm{H}) \nonumber \\
  & = \frac{1}{R}\sum_{r=1}^{R}\sum_{\smtr{s}{1}{r}=1}^{|\Ss|}\sum_{\smtr{s}{2}{r}=1}^{|\Ss|}\cdots \sum_{\smtr{s}{T^{r}}{r}}^{{|\Ss|}} {p(\smtr{s}{1}{r}, \smtr{s}{2}{r}, \cdots, \smtr{s}{T^{r}}{r}| \ubmr{x}{r}; \bm{H}^{\mathrm{old}})} \log\,p(\smtr{s}{1}{r}) \nonumber\\
  & = \frac{1}{R}\sum_{r=1}^{R}\sum_{\smtr{s}{1}{r}=1}^{|\Ss|}{p(\smtr{s}{1}{r}| \ubmr{x}{r}; \bm{H}^{\mathrm{old}})} \log\,p(\smtr{s}{1}{r};\bm{H}).
\end{align}

$p(\smtr{s}{1}{r};\bm{H})$ is the probability of initial state of GenHMM for $r$-th sequential sample. Actually $q_i = p({s}_{1} =i;\bm{H}) $, $i= 1, 2, \cdots, |\Ss|$. Solution to the problem
\begin{align}
  \bm{q} \hspace{-0.1cm} = \hspace{-0.1cm} \uargmax{\bm{q}} \Qq(\bm{q}; \bm{H}^{\mathrm{old}}),\; \mathrm{s.t.} \sum_{i=1}^{ |\Ss| }q_i = 1, q_i \geq 0, \forall i.
\end{align}
is
\begin{equation}\label{eq:update-initial-state-prob}
  q_i = \frac{1}{R} \sum_{r=1}^{R} p(\smtr{s}{1}{r}=i | \ubmr{x}{r}; \bm{H}^{\mathrm{old}}), \forall\; i = 1, 2, \cdots, |\Ss|.
\end{equation}

\subsubsection{Transition Probability Update}
The problem \eqref{eq:transition-update} can be reformulated as
\begin{align}
  &\Qq(\bm{A}; \bm{H}^{\mathrm{old}})\nonumber \\
  &= \sum_{r=1}^{R} \sum_{\ubmr{s}{r}}{p(\ubmr{s}{r}| \ubmr{x}{r}; \bm{H}^{\mathrm{old}})} \sum_{t=1}^{T^{r}-1}\log\,p(\smtr{s}{t+1}{r}|\smtr{s}{t}{r}; \bm{H}) \nonumber \\
  &= \sum_{r=1}^{R} \hspace{-0.1cm}\sum_{t=1}^{T^{r}-1}\hspace{-0.1cm} \sum_{\smtr{s}{t}{r}=1}^{|\Ss|}\hspace{-0.05cm}\sum_{\smtr{s}{t+1}{r}=1}^{|\Ss|}\hspace{-0.2cm}{p(\smtr{s}{t}{r}, \smtr{s}{t+1}{r}| \ubmr{x}{r};\hspace{-0.05cm} \bm{H}^{\mathrm{old}})} \log\,p(\smtr{s}{t+1}{r}|\smtr{s}{t}{r}; \bm{H}).
\end{align}

Since $\bm{A}_{i, j}  = p(\smtr{s}{t+1}{r}=j|\smtr{s}{t}{r}=i; \bm{H})$ is the element of transition matrix $\bm{A}$, the solution to the problem
\begin{align}\label{eq:update-transition-prob}
  \bm{A} = &\uargmax{\bm{A}} \Qq(\bm{A}; \bm{H}^{\mathrm{old}}) \nonumber \\
  \mathrm{s.t.} &\hspace{0.2cm} \bm{A} \cdot \bm{1} = \bm{1}, \bm{A}_{i,j} \geq 0 \,\, \forall i,j,
\end{align}
is
\begin{equation}
  \bm{A}_{i,j} = \frac{\bar{\xi}_{i,j}}{\sum_{k = 1}^{|\Ss|} \bar{\xi}_{i,k}},
\end{equation}
where
\begin{equation}\label{eq:update-transition-solt}
  \bar{\xi}_{i,j} = \sum_{r= 1}^{R} \sum_{t= 1}^{T^{r}-1}{p(\smtr{s}{t}{r}=i, \smtr{s}{t+1}{r}=j| \ubmr{x}{r}; \bm{H}^{\mathrm{old}})}.
\end{equation}
\subsection{On Convergence of GenHMM}
In pursuit of representing a dataset by GenHMM,  we are interested if the learning solution discussed in subsection~\ref{subsec:optmGenHMM} would converge. The properties on GenHMM's convergence are analyzed as follows.

\begin{prop}\label{proposition1}
  Assume that parameter $\bm{\Theta} = \left\{ \bm{\theta}_{s,\kappa}| s\in \Ss, \kappa=1, 2, \cdots, K \right\}$ is in a compact set,  $\bm{f}_{s,\kappa}$ and  ${\nabla\bm{f}_{s,\kappa}}$ are continuous w.r.t. ${\bm\theta}_{s,\kappa}$ in GenHMM. Then GenHMM converges.
\end{prop}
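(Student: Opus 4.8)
The plan is to run the classical EM monotonicity argument and then use the compactness hypotheses to turn monotonicity into convergence. Write $\mathcal{L}(\bm{H}) = \frac{1}{R}\sum_{r=1}^{R}\log p(\ubmr{x}{r};\bm{H})$ for the objective in \eqref{eq:ml-of-hmm}. Starting from $\log p(\ubm{x};\bm{H}) = \log p(\ubm{x},\ubm{s},\ubm{\kappa};\bm{H}) - \log p(\ubm{s},\ubm{\kappa}\,|\,\ubm{x};\bm{H})$ and applying $\EE_{\hat{p}(\ubm{x}),p(\ubm{s},\ubm{\kappa}|\ubm{x};\bm{H}^{\mathrm{old}})}[\cdot]$ to both sides gives the decomposition $\mathcal{L}(\bm{H}) = \Qq(\bm{H};\bm{H}^{\mathrm{old}}) - c(\bm{H};\bm{H}^{\mathrm{old}})$, where $c(\bm{H};\bm{H}^{\mathrm{old}}) = \EE_{\hat{p}(\ubm{x}),p(\ubm{s},\ubm{\kappa}|\ubm{x};\bm{H}^{\mathrm{old}})}[\log p(\ubm{s},\ubm{\kappa}\,|\,\ubm{x};\bm{H})]$. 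By Gibbs' inequality $c(\bm{H};\bm{H}^{\mathrm{old}}) \le c(\bm{H}^{\mathrm{old}};\bm{H}^{\mathrm{old}})$, so any update with $\Qq(\bm{H}^{\mathrm{new}};\bm{H}^{\mathrm{old}}) \ge \Qq(\bm{H}^{\mathrm{old}};\bm{H}^{\mathrm{old}})$ satisfies $\mathcal{L}(\bm{H}^{\mathrm{new}}) \ge \mathcal{L}(\bm{H}^{\mathrm{old}})$. Since \eqref{eq:m-step-subs} decouples the M-step, the closed-form updates \eqref{eq:update-initial-state-prob}, \eqref{eq:update-transition-prob}, \eqref{eq:mix-latent-parameter-solution} are exact maximizers of their sub-problems, and the generator SGD step is by construction an ascent step for $\Qq(\bm{\Theta};\bm{H}^{\mathrm{old}})$, the combined update never decreases $\Qq$; hence $\{\mathcal{L}(\bm{H}^{(n)})\}_{n\ge 0}$ is non-decreasing (a generalized-EM iteration).

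Next I would show that $\mathcal{L}$ is bounded above on the feasible set. The parameters $\bm{q}$, $\bm{A}$ and $\bm{\Pi}$ lie in compact probability simplices, and $\bm{\Theta}$ is compact by hypothesis. On this compact set, for every $s,\kappa$ the density $p(\bm{x}\,|\,s,\kappa;\bm{\theta}_{s,\kappa}) = p_{s,\kappa}(\bm{f}_{s,\kappa}(\bm{x}))\,|\det(\nabla\bm{f}_{s,\kappa}(\bm{x}))|$ from \eqref{eq:changel-variable} is bounded uniformly in $\bm{x}$: the standard-Gaussian latent density satisfies $p_{s,\kappa}(\cdot)\le (2\pi)^{-N/2}$, and by \eqref{eq:cat-jacobian}--\eqref{eq-hl-determinate} the Jacobian determinant is a product of the entries of $\bm{m}_a(\cdot)$, continuous in $\bm{\theta}_{s,\kappa}$ by assumption, so $\sup_{\bm{x}}|\det(\nabla\bm{f}_{s,\kappa}(\bm{x}))|$ is a continuous function of $\bm{\theta}_{s,\kappa}$ on a compact set, hence bounded; thus $p(\bm{x}\,|\,s,\kappa;\bm{\theta}_{s,\kappa})\le C$ with $C$ independent of $\bm{x},s,\kappa$. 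Consequently $p(\bm{x}\,|\,s;\bm{\Phi}_s)\le C$, and expanding the HMM likelihood over state sequences, $p(\ubmr{x}{r};\bm{H}) = \sum_{\ubm{s}} q_{s_1}\prod_{t} \bm{A}_{s_t,s_{t+1}}\prod_t p(\bmtr{x}{t}{r}\,|\,s_t;\bm{\Phi}_{s_t}) \le C^{T^{r}}$, so $\mathcal{L}(\bm{H}) \le \frac{1}{R}\sum_r T^{r}\log C < \infty$.

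A non-decreasing real sequence that is bounded above converges; therefore $\mathcal{L}(\bm{H}^{(n)}) \to \mathcal{L}^{\star}$ for some finite $\mathcal{L}^{\star}$, which is the sense in which the learning of GenHMM converges. If one additionally wants convergence of the iterates $\bm{H}^{(n)}$ themselves, I would argue in the style of Wu: the forward--backward posteriors $p(s\,|\,\ubm{x};\bm{H})$ and $p(s,\kappa\,|\,\ubm{x};\bm{H})$ are continuous in $\bm{H}$ on the compact feasible set (using continuity of $\bm{f}_{s,\kappa}$ and $\nabla\bm{f}_{s,\kappa}$), so $\Qq(\cdot\,;\cdot)$ is jointly continuous; compactness supplies convergent subsequences, $\mathcal{L}(\bm{H}^{(n+1)}) - \mathcal{L}(\bm{H}^{(n)}) \to 0$ forces each limit point to be a fixed point of the update map, and fixed points of the EM/GEM map are stationary points of $\mathcal{L}$; if the stationary points at level $\mathcal{L}^{\star}$ are isolated, $\bm{H}^{(n)}$ converges.

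\textbf{Main obstacle.} I expect the delicate step to be the uniform upper bound on the state densities in the second paragraph --- i.e. excluding the likelihood blow-up typical of unconstrained mixture/flow models --- which is precisely why the proposition assumes $\bm{\Theta}$ compact and $\bm{f}_{s,\kappa}$, $\nabla\bm{f}_{s,\kappa}$ continuous. A secondary subtlety is that the generator M-step is only a stochastic-gradient ascent step rather than an exact maximization, so the argument must be phrased as generalized EM (the update must at least not decrease $\Qq(\bm{\Theta};\bm{H}^{\mathrm{old}})$) rather than as standard EM.
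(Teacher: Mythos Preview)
Your argument is essentially the paper's own proof: decompose the log-likelihood into $\Qq$ plus a conditional-entropy/KL term, use Gibbs' inequality (equivalently $KL\ge 0$) together with the sub-problem ascent inequalities $\Qq(\bm{q}^{\mathrm{new}};\bm{H}^{\mathrm{old}})\ge\Qq(\bm{q}^{\mathrm{old}};\bm{H}^{\mathrm{old}})$, etc., to get monotonicity of $\mathcal{L}(\bm{H}^{(n)})$, and then invoke compactness of $\bm{\Theta}$ and continuity of $\bm{f}_{s,\kappa},\nabla\bm{f}_{s,\kappa}$ to bound $\mathcal{L}$ above. The paper stops exactly at ``monotone and bounded, hence convergent''; your Wu-style addendum on convergence of the iterates is extra and not part of the paper's claim.

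One small overreach in your boundedness step: you assert that $\sup_{\bm{x}\in\RR^N}|\det(\nabla\bm{f}_{s,\kappa}(\bm{x}))|$ is finite and continuous in $\bm{\theta}_{s,\kappa}$, but by \eqref{eq-hl-determinate} this determinant is a product of the entries of the MLP output $\bm{m}_a(\cdot)$, which need not be bounded over all of $\RR^N$. You do not need this: since $\hat{p}$ is an empirical distribution on finitely many sequences, $\mathcal{L}(\bm{H})$ is a \emph{finite} sum of terms, each of which is, by the assumed continuity of $\bm{f}_{s,\kappa}$ and $\nabla\bm{f}_{s,\kappa}$ in $\bm{\theta}_{s,\kappa}$, a continuous function of the parameters on the compact feasible set and hence bounded. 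That is precisely (and all that) the paper uses.
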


\begin{proof}
  We begin with the comparison of loglikelihood evaluated under $\bm{H}^{\mathrm{new}}$ and $\bm{H}^{\mathrm{old}}$. The loglikelihood of dataset given by $\hat{p}(\ubm{x})$ can be reformulated as
  \begin{align*}
    &\EE_{\hat{p}(\ubm{x})}\left[ \log\,p(\ubm{x};\bm{H}^{\mathrm{new}}) \right] \nonumber \\
    =& \EE_{\hat{p}(\ubm{x}),p(\ubm{s},\ubm{\kappa}| \ubm{x}; \bm{H}^{\mathrm{old}})}\left[ \log\,\frac{p(\ubm{x}, \ubm{s}, \ubm{\kappa}; \bm{H}^{\mathrm{new}})}{p(\ubm{s}, \ubm{\kappa}|\ubm{x}; \bm{H}^{\mathrm{old}})}\right] + \EE_{\hat{p}(\ubm{x})}\left[ KL(p(\ubm{s}, \ubm{\kappa}|\ubm{x}; \bm{H}^{\mathrm{old}})\|p(\ubm{s}, \ubm{\kappa}|\ubm{x}; \bm{H}^{\mathrm{new}})) \right],
  \end{align*}
  where the first term on the right hand side of the above inequality can be further written as
  \begin{align*}
    &\EE_{\hat{p}(\ubm{x}),p(\ubm{s},\ubm{\kappa}| \ubm{x}; \bm{H}^{\mathrm{old}})}\left[ \log\,\frac{p(\ubm{x}, \ubm{s}, \ubm{\kappa}; \bm{H}^{\mathrm{new}})}{p(\ubm{s}, \ubm{\kappa}|\ubm{x}; \bm{H}^{\mathrm{old}})}\right] \nonumber \\
    = &\Qq(\bm{H}^{\mathrm{new}}; \bm{H}^{\mathrm{old}}) + \EE_{\hat{p}(\ubm{x}),p(\ubm{s},\ubm{\kappa}| \ubm{x}; \bm{H}^{\mathrm{old}})}\left[p(\ubm{s}, \ubm{\kappa}|\ubm{x}; \bm{H}^{\mathrm{old}})\right].
  \end{align*}
  According to subsection~\ref{subsec:optmGenHMM}, the optimization problems give
  \begin{align*}
    \Qq(\bm{q}^{\mathrm{new}}; \bm{H}^{\mathrm{old}}) &\geq \Qq(\bm{q}^{\mathrm{old}}; \bm{H}^{\mathrm{old}}),\nonumber \\
    \Qq(\bm{A}^{\mathrm{new}}; \bm{H}^{\mathrm{old}}) &\geq \Qq(\bm{A}^{\mathrm{old}}; \bm{H}^{\mathrm{old}}),\nonumber \\
    \Qq(\bm{\Pi}^{\mathrm{new}}; \bm{H}^{\mathrm{old}}) &\geq \Qq(\bm{\Pi}^{\mathrm{old}}; \bm{H}^{\mathrm{old}}), \nonumber \\
    \Qq(\bm{\Theta}^{\mathrm{new}}; \bm{H}^{\mathrm{old}}) &\geq \Qq(\bm{\Theta}^{\mathrm{old}}; \bm{H}^{\mathrm{old}}).
  \end{align*}
  Since
  \begin{align*}
    \Qq(\bm{H}^{\mathrm{new}}; \bm{H}^{\mathrm{old}}) = &\Qq(\bm{q}^{\mathrm{new}}; \bm{H}^{\mathrm{old}}) + \Qq(\bm{A}^{\mathrm{new}}; \bm{H}^{\mathrm{old}}) + \Qq(\bm{\Pi}^{\mathrm{new}}; \bm{H}^{\mathrm{old}})
                                                          + \Qq(\bm{\Theta}^{\mathrm{new}}; \bm{H}^{\mathrm{old}}),
  \end{align*}
  it gives
  \begin{equation*}
    \Qq(\bm{H}^{\mathrm{new}}; \bm{H}^{\mathrm{old}}) \geq \Qq(\bm{H}^{\mathrm{old}}; \bm{H}^{\mathrm{old}}).
  \end{equation*}
  With the above inequality, and the fact that $\EE_{\hat{p}(\ubm{x}),p(\ubm{s},\ubm{\kappa}| \ubm{x}; \bm{H}^{\mathrm{old}})}\left[p(\ubm{s}, \ubm{\kappa}|\ubm{x}; \bm{H}^{\mathrm{old}})\right]$ is independent of $\bm{H}^{\mathrm{new}}$, we have the inequality 
  \begin{align*}
    &\EE_{\hat{p}(\ubm{x}),p(\ubm{s},\ubm{\kappa}| \ubm{x}; \bm{H}^{\mathrm{old}})}\left[ \log\,\frac{p(\ubm{x}, \ubm{s}, \ubm{\kappa}; \bm{H}^{\mathrm{new}})}{p(\ubm{s}, \ubm{\kappa}|\ubm{x}; \bm{H}^{\mathrm{old}})}\right] 
    \geq \EE_{\hat{p}(\ubm{x}),p(\ubm{s},\ubm{\kappa}| \ubm{x}; \bm{H}^{\mathrm{old}})}\left[ \log\,\frac{p(\ubm{x}, \ubm{s}, \ubm{\kappa}; \bm{H}^{\mathrm{old}})}{p(\ubm{s}, \ubm{\kappa}|\ubm{x}; \bm{H}^{\mathrm{old}})}\right].
  \end{align*}
  Due to $KL(p(\ubm{s}, \ubm{\kappa}|\ubm{x};
  \bm{H}^{\mathrm{old}})\|p(\ubm{s}, \ubm{\kappa}|\ubm{x};
  \bm{H}^{\mathrm{old}}))=0$, we have
  \begin{align*}
    &\EE_{\hat{p}(\ubm{x})}\left[ \log\,p(\ubm{x};\bm{H}^{\mathrm{new}}) \right] \nonumber \\
    \geq & \EE_{\hat{p}(\ubm{x}),p(\ubm{s},\ubm{\kappa}| \ubm{x};
           \bm{H}^{\mathrm{old}})}\left[ \log\,\frac{p(\ubm{x},
           \ubm{s}, \ubm{\kappa}; \bm{H}^{\mathrm{old}})}{p(\ubm{s},
           \ubm{\kappa}|\ubm{x}; \bm{H}^{\mathrm{old}})}\right]
           + \EE_{\hat{p}(\ubm{x})}\left[ KL(p(\ubm{s}, \ubm{\kappa}|\ubm{x}; \bm{H}^{\mathrm{old}})\|p(\ubm{s}, \ubm{\kappa}|\ubm{x}; \bm{H}^{\mathrm{old}})) \right]
      \nonumber \\
    = & \EE_{\hat{p}(\ubm{x})}\left[ \log\,p(\ubm{x};\bm{H}^{\mathrm{old}}) \right].
  \end{align*}
  Since $\bm{f}_{s,\kappa}$ and  ${\nabla\bm{f}_{s,\kappa}}$ are continuous w.r.t. ${\bm\theta}_{s,\kappa}$ in GenHMM, $\EE_{\hat{p}(\ubm{x})}\left[ \log\,p(\ubm{x};\bm{H}) \right]$ is bounded. The above inequality shows $\EE_{\hat{p}(\ubm{x})}\left[ \log\,p(\ubm{x};\bm{H}) \right]$ is non-decreasing in learning of GenHMM. Therefore, GenHMM will converge.
  
\end{proof}

\subsection{Algorithm of GenHMM}
\begin{algorithm}[H]
  \caption{Learning of GenHMM}\label{algo:genhmm}
  \begin{algorithmic}[1]
    \STATE {\bfseries Input:}{
      Empirical distribution $\hat{p}(\bm{x})$ of dataset}\\
    \STATE Initializing $\bm{H}^{\mathrm{old}}, \bm{H} \in \Hh$ gives: \\
    $\bm{H}^{\mathrm{old}} = \{\Ss, \bm{q}^{\mathrm{old}}, A^{\mathrm{old}}, p(\bm{x}|s; \bm{\Phi}_{s}^{\mathrm{old}})\}$, \\
    $\bm{H} = \{\Ss, \bm{q}, A, p(\bm{x}|s; \bm{\Phi}_{s})\}$, \\
    in which generators $\left\{\bm{g}_{s,\kappa}|s\in \Ss, \kappa=1,
      2, \cdots, K \right\}$ are all initialized randomly.
    \STATE $\bm{H}^{\mathrm{old}} \gets \bm{H}$
    \STATE Set learning rate $\eta$, neural network optimization batches $N$ per EM step
    \FOR { $\bm{H}$ not converge}
    \FOR {epoch $n < N$}
    \STATE Sample a batch of data $\left\{ \ubmr{x}{r} \right\}_{r=1}^{R_b}$ from dataset $\hat{p}(\ubm{x})$ with batch size $R_b$

    \STATE Compute posterior $p(\smtr{s}{t}{r}, \smtr{\kappa}{t}{r}| \ubmr{x}{r}; \bm{H}^{\mathrm{old}})$  
    \STATE Formulate loss ${\Qq}\left({\bm{\Theta}}, {\bm{H}}^{\mathrm{old}}\right)$ in \eqref{eq:obj-q-gen-mix}

    \STATE $\partial{\bm{\Theta}} \gets  \nabla_{\bm{\Theta}} {\Qq}\left({\bm{\Theta}},{\bm{H}}^{\mathrm{old}}\right)$
    \STATE $\bm{\Theta} \gets \bm{\Theta} + \eta \cdot \partial{\bm{\Theta}}$
    \ENDFOR
    \STATE $\bm{q} \gets \uargmax{\bm{q}}\, \Qq(\bm{q}; \bm{H}^{\mathrm{old}})$ by \eqref{eq:update-initial-state-prob}
    \STATE $\bm{A} \gets \uargmax{\bm{A}}\Qq(\bm{A}; \bm{H}^{\mathrm{old}})$ by \eqref{eq:update-transition-solt}
    \STATE $\bm{\Pi} \gets \uargmax{\bm{\Pi}}\Qq(\bm{\Phi}; \bm{H}^{\mathrm{old}})$ by \eqref{eq:mix-latent-parameter-solution}
    \STATE $\bm{H}^{\mathrm{old}} \gets \bm{H}$
    \ENDFOR
  \end{algorithmic}
\end{algorithm}

To summarize the learning solution in subsection~\ref{subsec:optmGenHMM}, we wrap our algorithm into pseudocode as shown in Algorithm~\ref{algo:genhmm}. We use Adam \cite{DBLP:journals/corr/KingmaB14} optimizer for optimization w.r.t. the parameters of generators in GenHMM. As shown from line $6$ to $10$ in Algorithm~\ref{algo:genhmm}, the batch-size stochastic gradient decent can be naturally embedded into the learning algorithm of GenHMM.

As described by the pseudocode in Algorithm~\ref{algo:genhmm}, the learning of GenHMM is divided into optimizations w.r.t. to generators' parameters $\bm{\Theta}$, initial probability $\bm{q}$ of hidden state, transition matrix $\bm{A}$, and generator mixture weights $\bm{\Pi}$. Different from the optimization w.r.t. to $\bm{q}$, $\bm{A}$ and $\bm{\Pi}$, which have optimal solutions, generator learning usually cannot give optimal solution to problem $\max_{\bm{\Theta}} \Qq(\bm{\Theta}; \bm{H}^{\mathrm{old}})$. In fact, given that no optimal $\bm{\Theta}$ is obtained, learning of GenHMM can still converge as long as quantity $\Qq(\bm{\Theta}; \bm{H})$ are improving in iterations in Algorithm~\ref{algo:genhmm}, where the inequalities in Proposition~\ref{proposition1} still hold. Therefore optimal $\bm{\Theta}$ in each iteration is not required for convergence of GenHMM as long as the loss in $\eqref{eq:obj-q-gen-mix}$ is getting improved.

\section{Experiments}
To show the validity of our model, we implement our model in PyTorch and test it with sequential data. We first discuss the experimental setups and then show the experimental results. Code for experiments is available at {https://github.com/FirstHandScientist/genhmm}.

\subsection{Experimental Setup}
The dataset used for sequential data modeling and classification is TIMIT where the speech signal is sampled at $16$kHz.
The TIMIT dataset consists of $5300$ phoneme-labeled speech utterances which are partitioned into two sets: {a train set consists of $4620$ utterance, and a test set consists of $1680$ utterances.} There are totally $61$ different types of phones in TIMIT.
We performed experiments in two cases: i) full $61$-phoneme classification case; ii) $39$-phonme classification case, where $61$ phonemes are folded onto $39$ phonemes as described in \cite{Perdigao11}.

For extraction of feature vectors, we use $25$ms frame length and $10$ms frame shift to convert sound track into standard Mel-frequency cepstral coefficients (MFCCs) features. Experiments using the deltas and delta-deltas of the features are also carried out.

Our experiments are performed for: i) standard classification tasks (Table~\ref{tab:acc-classification39f_a}, \ref{tab:acc-classification39f_b}, \ref{tab:acc-classification13f_a}, \ref{tab:acc-classification13f_b}), ii) classification under noise perturbation (table~\ref{tab:acc-classification39f_noise_snr}, \ref{tab:acc-classification39f_noise_type}). The criterion used to report the results includes accuracy, precision and F1 scores.
In all experiments, generators $\left\{\bm{g}_{s,\kappa}|s\in \Ss, \kappa=1, 2, \cdots, K \right\}$ of GenHMM are implemented as flow models. Specifically, our generator structure follows that of a RealNVP described in \cite{2016arXiv160508803D}.
As discussed, the coupling layer shown in \eqref{eq-gl} maps a part of its input signal identically.
The implementation is such that layer $l+1$ would alternate the input signal order of layer $l$ such that no signal remains the same after two consecutive coupling layers.
We term such a pair of consecutive coupling layers as a \textit{flow block}.
In our experiments, each generator $\bm{g}_{s,\kappa}$ consists of four \textit{flow blocks}.
The density of samples in the latent space is defined as Normal,
i.e. $p_{s,\kappa}(\bm{z})$ is the density function of standard
Gaussian. The configuration for each generator is shown as Table~\ref{table:generator-setting}.

\begin{table}
  \caption{Configuration of generators of GenHMM in Experiments}\label{table:generator-setting}
  \centering
  \begin{tabular}{l|c}
    \toprule
    \makecell{Latent distribution $p_{s,\kappa}(\bm{z})$
    \\$s\in \Ss, \kappa=1, 2, \cdots, K $} & Standard Gaussian \\
    \hline
    Number of flow blocks & $4$ \\
    \hline
    Non-linear mapping $\bm{m}_a$, $\bm{m}_b$ & \makecell{Multiple
                                                layer perception, \\
    $3$ layers and with hidden dimension $24$}\\
    \bottomrule
  \end{tabular}
\end{table}

For each GenHMM, the number of states is adapted to the training dataset.
The exact number of states is decided by computing the average length of MFCC frames per phone in training dataset, and clipping the average length into $\left\{ 3,4,5 \right\}$.
Transition matrix $\bm{A}$ is initialized as upper triangular matrix for GenHMM.

\begin{table}
  \caption{Test accuracy table for $39$ dimensional features and folded $39$ phonemes.}\label{tab:acc-classification39f_a}
  \centering  
  \begin{tabular}{l|l|c|c|c}
    \toprule
    {Model} & Criterion &  K=1 &  K=3 &  K=5  \\  \midrule
    \multirow{3}{*}{GMM-HMM}
            & Accuracy    & $62.3\%$ &  $68.0\%$ &  $68.7\%$  \\
            & {Precision} & $67.9\%$ &  $72.6\%$ &  $73.0\%$  \\
            & {F1}        & $63.7\%$ &  $69.1\%$ &  $69.7\%$ \\
    \midrule
    \multirow{3}{*}{GenHMM}
            & Accuracy    & $76.7\%$   & $77.7\%$ &  $77.7\%$ \\ 
            & {Precision} & $76.9\%$   & $78.1\%$ &  $78.0\%$ \\
            & {F1}        & $76.1\%$   & $77.1\%$ &  $77.0\%$\\
    \bottomrule                                                                  
  \end{tabular}
\end{table}

\begin{table}
  \caption{Test accuracy table for $39$ dimensional features and $61$ phonemes.}\label{tab:acc-classification39f_b}
  \centering  
  \begin{tabular}{l|l|c|c|c} \toprule
    {Model} & Criterion & K=1 &  K=3 &  K=5
    \\ \midrule
    \multirow{4}{*}{GMM-HMM}
            & Accuracy    & $53.6\%$ &  $59.6\%$ & $61.9\%$  \\
            & {Precision} & $59.1\%$ &  $63.9\%$ & $65.7\%$ \\
            & {F1}        & $54.7\%$ &  $60.5\%$ & $62.7\%$\\
    \midrule
    \multirow{3}{*}{GenHMM}
            & Accuracy    & $69.5\%$ & $70.6\%$ & $70.7\%$   \\
            & {Precision} & $69.2\%$ & $70.5\%$ & $71.0\%$ \\
            & {F1}        & $68.6\%$ & $69.6\%$ & $69.6\%$\\
    \bottomrule
  \end{tabular}
  \vspace{0.5cm}
\end{table}

\subsection{Experimental Results}

We firstly show the phoneme classification using 39 dimensional MFCC features (MFCC coefficients, deltas, and delta-deltas), to validate one possible usage of our proposed model. Since generative training is carried out in our experiments, GMM-HMM is trained and tested as a reference model in our experiments. Training and testing of GMM-HMM is in the same condition as GenHMMs are trained and tested. Dataset usage for GenMM and GMM-HMM is the same, and number of states for GMM-HMM is the same as that for GenHMM in modeling each phoneme. Apart from setting the reference model, we also run the experiment comparisons with different total number of mixture components.

Table \ref{tab:acc-classification39f_a} and \ref{tab:acc-classification39f_b} shows the results for this experiments, in which we test both the folded $39$-phoneme classification case (the conventional way) in Table~\ref{tab:acc-classification39f_a} and the $61$-phoneme classification case in Table~\ref{tab:acc-classification39f_b}. As shown in both $61$-phoneme and $39$-phoneme cases, GenHMM gets significant higher accuracy than GMM-HMM for the same number of mixture components. The comparisons with regarding to precision and F1 scores show similar trends and also demonstrate significant improvement of GenHMM's performance. As our expectation, GenHMM has better modeling capacity of sequential data since we bring in the neural network based generators into GenHMM, which should be able to represent complex relationship between states of HMM and sequential data. Apart from the gain of using neural network based generative models, there are also increases of accuracy, precision and F1 scores as the number of mixture components in GenHMM is increased from $K=1$ to $K=5$. The sequential dependency of data is modeled by HMM itself, while each state of HMM can have better representation using a mixture probabilistic model if data represented by the state is multi-mode. Comparing the results in $39$-phoneme and $61$-phoneme cases, GenHMM gets higher accuracy for $39$-phoneme classification than it does for $61$-phoneme classification. The total training dataset size remains the same as $61$ phonemes are folded into $39$ phonemes. There are less training data available per phonemes and more classes to be recognized in the $61$-phoneme case, which makes the task more challenging.

\begin{table}
  \caption{Test accuracy table for $13$ dimensional features and folded $39$ phonemes.}\label{tab:acc-classification13f_a}
  \centering  
  \begin{tabular}{l|l|c|c|c}
    \toprule
    {Model} & Criterion & K=1 &  K=3 &  K=5  \\  \midrule
    \multirow{3}{*}{GMM-HMM}
            & Accuracy & $48.5\%$ &  $51.2\%$ &  $52.4\%$  \\
            & Precision& $56.2\%$ &  $58.3\%$ &  $59.5\%$  \\
            & F1       & $50.3\%$ &  $53.0\%$ &  $54.2\%$  \\
    \midrule
    \multirow{3}{*}{GenHMM}
            & Accuracy & $61.1\%$ &  $62.1\%$ & $62.1\%$   \\ 
            & Precision& $61.1\%$ &  $61.9\%$ &  $62.1\%$  \\
            & F1       & $59.7\%$ &  $60.7\%$ &  $60.2\%$  \\

    \bottomrule                                                                  
  \end{tabular}
\end{table}

\begin{table}
  \caption{Test accuracy table for $13$ dimensional features and $61$
    phonemes.}\label{tab:acc-classification13f_b}
  \centering
  \begin{tabular}{l|l|c|c|c}
    \toprule
    {Model} & Criterion &  K=1 &  K=3 &  K=5 \\ \midrule
    \multirow{3}{*}{GMM-HMM}
            & Accuracy & $37.1\%$ &  $40.6\%$ & $42.2\%$  \\
            &Precision & $44.6\%$ &  $47.4\%$ & $48.8\%$  \\
            & F1       & $38.8\%$ &  $42.1\%$ & $43.7\%$ \\
    \midrule
    \multirow{3}{*}{GenHMM}
            & Accuracy & $50.3\%$ & $50.8\%$ & $52.3\%$   \\
            & Precision& $49.3\%$ & $50.9\%$ & $52.1\%$ \\
            &F1        & $47.8\%$ & $48.3\%$ & $49.3\%$ \\
    \bottomrule
  \end{tabular}
  \vspace{0.5cm}
\end{table}

Similar experiments are carried out by using only the MFCC coefficients as feature input (excluding deltas and delta-deltas). The results are shown in Table~\ref{tab:acc-classification13f_a} and \ref{tab:acc-classification13f_b}. The superior performance of GenHMM remains compared with reference model GMM-HMM, with regarding to accuracy, precision and F1 scores. The gain by using mixture generators is also presented in this set of experiments while the difference between $61$-phoneme and $39$-phoneme cases is similar to the set of experiments in Table~\ref{tab:acc-classification39f_a} and \ref{tab:acc-classification39f_b}.

\begin{table}
  \caption{Test accuracy table of perturbation with white noise ($K=3$, folded $39$ phonomes).}
  \label{tab:acc-classification39f_noise_snr}
  \vspace{-0.1cm}
  \centering
  \begin{tabular}{l|l|c|c|c|c}
    \toprule
    \multirow{2}{*}{Model} & \multirow{2}{*}{Criterion} &
                                                          \multicolumn{4}{c}{White Noise SNR} \\
    \cline{3-6}
    
                           && \makecell{15dB} &  \makecell{20dB} &  \makecell{25dB} & \makecell{30dB}  \\
    \midrule
    \multirow{3}{*}{GMM-HMM}
                           & Accuracy & $36.6\%$ &  $44.2\%$ &  $50.8\%$ & $57.1\%$
    \\
                           &Precision & $59.2\%$ &  $64.2\%$ &  $68.4\%$ & $70.6\%$  \\
                           & F1       & $39.9\%$ &  $47.7\%$ &  $53.9\%$ & $59.9\%$ \\
    \midrule
    \multirow{3}{*}{GenHMM}
                           & Accuracy & $52.4\%$ & $62.0\%$ &  $69.7\%$ & $74.3\%$ \\
                           &Precision & $60.0\%$ &  $65.9\%$ &  $71.7\%$ & $74.8\%$  \\
                           & F1       & $52.5\%$ &  $62.0\%$ &  $69.3\%$ & $73.5\%$ \\
    \bottomrule                                                                  
  \end{tabular}
  \vspace{0.1cm}
\end{table}
\begin{table}
  \caption{Test accuracy table of perturbation by different type of noise (SNR=$20$dB, $K=3$, folded $39$ phonomes).}
  \label{tab:acc-classification39f_noise_type}
  \vspace{-0.1cm}
  \centering
  \begin{tabular}{l|l|c|c|c|c}
    \toprule
    \multirow{2}{*}{Model} & \multirow{2}{*}{Criterion} &
                                                          \multicolumn{4}{c}{Noise Type} \\
    \cline{3-6}
    
                           &  &  White &  Pink &  Babble & Volvo  \\
    \midrule
    \multirow{3}{*}{GMM-HMM}
                           & Accuracy & $44.2\%$ &  $48.8\%$ &  $57.7\%$ & $66.6\%$
    \\
                           &Precision & $64.2\%$ &  $66.1\%$ &  $67.0\%$ & $71.9\%$  \\
                           & F1       & $47.7\%$ &  $52.3\%$ &  $59.7\%$ & $67.8\%$ \\
    \midrule
    \multirow{3}{*}{GenHMM}
                           & Accuracy & $62.0\%$ &  $65.1\%$ &  $70.0\%$ & $75.7\%$ \\
                           &Precision & $65.9\%$ &  $67.8\%$ &  $70.4\%$ & $75.9\%$  \\
                           & F1       & $62.0\%$ &  $64.6\%$ &  $69.0\%$ & $75.3\%$ \\
    \bottomrule                                                                  
  \end{tabular}
  \vspace{0.3cm}
\end{table}

Apart from standard classification testing, we also test the robustness of our model to noise perturbations. We train GenHMM with $K=3$ by clean TIMIT training data in the case of folded $39$ phonemes with $39$ dimensional features. The testing dataset is perturbed by either the same type of noise with different signal-to-noise ratio (SNR) as shown in Table~\ref{tab:acc-classification39f_noise_snr}, or different type of noises with the same SNR as shown in Table~\ref{tab:acc-classification39f_noise_type}. The noise data is from NOISEX-92 database. The baseline of these two sets of experiments is the accuracy testing of GenHMM and GMM-HMM on clean testing data in the same experimental condition, where GenHMM has $77.7\%$ and GMM-HMM gets $68.0\%$ as shown in Table~\ref{tab:acc-classification39f_a}. Similar superior performance of GenHMM with regarding to precision and F1 scores is also shown. It is shown in Table~\ref{tab:acc-classification39f_noise_snr} that GMM-HMM's performance degenerates more than GenHMM's performance at the same level of noise perturbation, though the accuracy of both models increases along the increase of SNR. Especially, for SNR=$30$dB, the accuracy of GenHMM drops only about $3\%$ (from $77.7\%$ to $74.3\%$), while GMM-HMM encounters more than $10\%$ decrease (from $68.0\%$ to $57.1\%$) due to the noise perturbation. In Table~\ref{tab:acc-classification39f_noise_type}, the SNR remains constant and GenHMM is tested with perturbation of different noise types. It is shown that GenHMM still remain higher performance scores at different types of noise perturbations than GMM-HMM. Among these four types of noise, white noise shows most significant impact to GenHMM while the impact of volvo noise is negligible.

\section{Conclusion}
In this work, we proposed a generative model based HMM (GenHMM) whose generators are realized by neural networks. We provided the training method for GenHMM. The validity of GenHMM was demonstrated by the experiments of classification tasks on practical sequential dataset. The learning method in this work is based on generative training. For future work, we would consider discriminative training for classification tasks of sequential data.

\section{Acknowledgments}
We would like to thank Dr. Minh Thành Vu for his discussions and comments on the algorithm analysis, which helped improve this paper considerably.
The computations were enabled by resources provided by the Swedish National Infrastructure for Computing (SNIC) at HPC2N partially funded by the Swedish Research Council through grant agreement no. 2016-07213.

\bibliographystyle{plain}
\bibliography{myref}

\begin{thebibliography}{10}

\bibitem{ASHWIN20172}
N.M.R. Ashwin, Leonard Barnabas, Amalraj~Ramesh Sundar, Palaniyandi Malathi,
  Rasappa Viswanathan, Antonio Masi, Ganesh~Kumar Agrawal, and Randeep Rakwal.
\newblock Comparative secretome analysis of colletotrichum falcatum identifies
  a cerato-platanin protein (epl1) as a potential pathogen-associated molecular
  pattern (pamp) inducing systemic resistance in sugarcane.
\newblock {\em Journal of Proteomics}, 169:2 -- 20, 2017.
\newblock 2nd World Congress of the International Plant Proteomics
  Organization.

\bibitem{juang1986maximum}
{Bing-Hwang Juang}, S.~{Levinson}, and M.~{Sondhi}.
\newblock Maximum likelihood estimation for multivariate mixture observations
  of markov chains (corresp.).
\newblock {\em IEEE Transactions on Information Theory}, 32(2):307--309, March
  1986.

\bibitem{Bishop:2006:PRM:1162264}
Christopher~M. Bishop.
\newblock {\em Pattern Recognition and Machine Learning (Information Science
  and Statistics)}.
\newblock Springer-Verlag, Berlin, Heidelberg, 2006.

\bibitem{buys2018bridging}
Jan Buys, Yonatan Bisk, and Yejin Choi.
\newblock Bridging hmms and rnns through architectural transformations.
\newblock In {\em 32nd Conference on Neural Information Processing Systems,
  IRASL workshop}. 2018.

\bibitem{chatterjee2011auditory}
S.~{Chatterjee} and W.~B. {Kleijn}.
\newblock Auditory model-based design and optimization of feature vectors for
  automatic speech recognition.
\newblock {\em IEEE Transactions on Audio, Speech, and Language Processing},
  19(6):1813--1825, Aug 2011.

\bibitem{NIPS2018_7892}
Tian~Qi Chen, Yulia Rubanova, Jesse Bettencourt, and David~K Duvenaud.
\newblock Neural ordinary differential equations.
\newblock In S.~Bengio, H.~Wallach, H.~Larochelle, K.~Grauman, N.~Cesa-Bianchi,
  and R.~Garnett, editors, {\em Advances in Neural Information Processing
  Systems 31}, pages 6571--6583. Curran Associates, Inc., 2018.

\bibitem{ding2018reinforcementhmm}
W.~{Ding}, S.~{Li}, H.~{Qian}, and Y.~{Chen}.
\newblock Hierarchical reinforcement learning framework towards multi-agent
  navigation.
\newblock In {\em 2018 IEEE International Conference on Robotics and
  Biomimetics (ROBIO)}, pages 237--242, Dec 2018.

\bibitem{2016arXiv160508803D}
L.~{Dinh}, J.~{Sohl-Dickstein}, and S.~{Bengio}.
\newblock {Density estimation using Real NVP}.
\newblock {\em ArXiv e-prints}, May 2016.

\bibitem{DBLP:journals/corr/DinhKB14}
Laurent Dinh, David Krueger, and Yoshua Bengio.
\newblock {NICE:} non-linear independent components estimation.
\newblock {\em CoRR}, abs/1410.8516, 2014.

\bibitem{gales2008application}
M.~{Gales} and S.~{Young}.
\newblock {\em Application of Hidden Markov Models in Speech Recognition}.
\newblock now, 2008.

\bibitem{DBLP:grathwohl2018FFJORD}
Will Grathwohl, Ricky T.~Q. Chen, Jesse Bettencourt, Ilya Sutskever, and David
  Duvenaud.
\newblock {FFJORD:} free-form continuous dynamics for scalable reversible
  generative models.
\newblock {\em CoRR}, abs/1810.01367, 2018.

\bibitem{Hariyanti_2019}
Trienani Hariyanti, Saori Aida, and Hiroyuki Kameda.
\newblock Samawa language part of speech tagging with probabilistic approach:
  Comparison of unigram, {HMM} and {TnT} models.
\newblock {\em Journal of Physics: Conference Series}, 1235:012013, jun 2019.

\bibitem{hinton2012deepSpeech}
G.~{Hinton}, L.~{Deng}, D.~{Yu}, G.~E. {Dahl}, A.~{Mohamed}, N.~{Jaitly},
  A.~{Senior}, V.~{Vanhoucke}, P.~{Nguyen}, T.~N. {Sainath}, and
  B.~{Kingsbury}.
\newblock Deep neural networks for acoustic modeling in speech recognition: The
  shared views of four research groups.
\newblock {\em IEEE Signal Processing Magazine}, 29(6):82--97, Nov 2012.

\bibitem{Hinton2012}
Geoffrey~E. Hinton.
\newblock {\em A Practical Guide to Training Restricted Boltzmann Machines},
  pages 599--619.
\newblock Springer Berlin Heidelberg, Berlin, Heidelberg, 2012.

\bibitem{khan2016survey}
Wahab Khan, Ali Daud, Jamal~A Nasir, and Tehmina Amjad.
\newblock A survey on the state-of-the-art machine learning models in the
  context of nlp.
\newblock {\em Kuwait journal of Science}, 43(4), 2016.

\bibitem{DBLP:journals/corr/KingmaB14}
Diederik~P. Kingma and Jimmy Ba.
\newblock Adam: {A} method for stochastic optimization.
\newblock {\em CoRR}, abs/1412.6980, 2014.

\bibitem{kingma2018glow}
Durk~P Kingma and Prafulla Dhariwal.
\newblock Glow: Generative flow with invertible 1x1 convolutions.
\newblock In S.~Bengio, H.~Wallach, H.~Larochelle, K.~Grauman, N.~Cesa-Bianchi,
  and R.~Garnett, editors, {\em Advances in Neural Information Processing
  Systems 31}, pages 10215--10224. Curran Associates, Inc., 2018.

\bibitem{vik2016rnnHmm}
Viktoriya {Krakovna} and Finale {Doshi-Velez}.
\newblock {Increasing the Interpretability of Recurrent Neural Networks Using
  Hidden Markov Models}.
\newblock {\em arXiv e-prints}, page arXiv:1606.05320, Jun 2016.

\bibitem{levine2018reinforcementReview}
Sergey Levine.
\newblock Reinforcement learning and control as probabilistic inference:
  Tutorial and review.
\newblock {\em CoRR}, abs/1805.00909, 2018.

\bibitem{li2013hybrid}
L.~{Li}, Y.~{Zhao}, D.~{Jiang}, Y.~{Zhang}, F.~{Wang}, I.~{Gonzalez},
  E.~{Valentin}, and H.~{Sahli}.
\newblock Hybrid deep neural network--hidden markov model (dnn-hmm) based
  speech emotion recognition.
\newblock In {\em 2013 Humaine Association Conference on Affective Computing
  and Intelligent Interaction}, pages 312--317, Sep. 2013.

\bibitem{liu2019lstmHmmHyb}
Larkin Liu, Yu{-}Chung Lin, and Joshua Reid.
\newblock Improving the performance of the {LSTM} and {HMM} models via
  hybridization.
\newblock {\em CoRR}, abs/1907.04670, 2019.

\bibitem{Perdigao11}
Carla Lopes and Fernando Perdigao.
\newblock Phoneme recognition on the timit database.
\newblock In Ivo Ipsic, editor, {\em Speech Technologies}, chapter~14.
  IntechOpen, Rijeka, 2011.

\bibitem{Miao2013ImprovingLC}
Yajie Miao and Florian Metze.
\newblock Improving low-resource cd-dnn-hmm using dropout and multilingual dnn
  training.
\newblock In {\em INTERSPEECH}, 2013.

\bibitem{ren2015dna}
S.~{Ren}, V.~{Sima}, and Z.~{Al-Ars}.
\newblock Fpga acceleration of the pair-hmms forward algorithm for dna sequence
  analysis.
\newblock In {\em 2015 IEEE International Conference on Bioinformatics and
  Biomedicine (BIBM)}, pages 1465--1470, Nov 2015.

\end{thebibliography}

\end{document}